\newcommand{\editage}{black}
\newcommand{\fix}{black}
\newcommand{\altcolor}{black}
\newcommand{\revisionOne}{black}
\newcommand{\revisionTwo}{black}
\newcommand{\revisionThree}{black}
\newcommand{\revisionfour}{black}
\newcommand{\editageTwo}{black}
\newcommand{\ho}{black}
\newcommand{\ediThree}{black}
\begin{document}

\title{DNA Steganalysis Using Deep Recurrent Neural Networks}

\author{Ho Bae\,$^{\text{\sf 1}}$, Byunghan Lee\,$^{\text{\sf 2, \sf3}}$, Sunyoung Kwon\,$^{\text{\sf 2, \sf4 }}$ and Sungroh Yoon\,$^{\text{\sf 1, \sf 2, \sf5,}*}$}

\address{$^{\text{\sf 1}}$Interdisciplinary Program in Bioinformatics, Seoul National University, Seoul 08826, Korea \\ 
$^{\text{\sf 2}}$Electrical and Computer Engineering, Seoul National University, Seoul 08826, Korea \\
$^{\text{\sf 3}}$Electronic and IT Media Engineering, Seoul National University of Science and Technology, Seoul 01811, Korea\\
$^{\text{\sf 4}}$Clova AI Research, NAVER Corp., Seongnam 13561, Korea\\
$^{\text{\sf 5}}$ASRI and INMC, Seoul National University, Seoul 08826, Korea\\
$^{}$E-mail: sryoon@snu.ac.kr}

\begin{abstract}

Recent advances \textcolor{\ediThree}{in} next-generation sequencing technologies have \textcolor{\ediThree}{facilitated the use} of deoxyribonucleic acid (DNA) as a novel covert channels in steganography. There \textcolor{\ediThree}{are} various methods \textcolor{\ediThree}{that exist} in other domains to detect hidden messages in conventional covert channels. However, they have not been applied to DNA steganography. The current most common detection \textcolor{\ediThree}{approaches}, \textcolor{\editageTwo}{namely} frequency analysis-based methods, often \textcolor{\editageTwo}{overlook} important signals when directly applied to DNA steganography \textcolor{\editageTwo}{because those} methods depend on the distribution of the number of sequence characters.
To address \textcolor{\editageTwo}{this} limitation, we propose a general sequence learning-based DNA steganalysis framework. The \textcolor{\editageTwo}{proposed} approach learns the intrinsic distribution of coding and non-coding sequences and detects hidden messages by exploiting distribution variations after hiding \textcolor{\ediThree}{these} messages. Using deep recurrent neural networks (RNNs), our framework identifies the distribution variations by using the classification score \textcolor{\ediThree}{to predict} whether a sequence is to be a coding or non-coding sequence.
We compare our \textcolor{\editageTwo}{proposed} method to various existing methods and biological sequence analysis methods implemented on top of our framework. According to our experimental results, our approach delivers a robust detection performance \textcolor{\ho}{compared to other tools.}

\end{abstract}

\keywords{Deep recurrent neural network, DNA steganography, DNA steganalysis, DNA watermarking}

\copyrightinfo{\copyright\ 2018 The Authors. Open Access chapter published by World Scientific Publishing Company and distributed under the terms of the Creative Commons Attribution Non-Commercial (CC BY-NC) 4.0 License.}

\bodymatter

\section{Introduction}
\textcolor{\ho}{Steganography serves to conceal} \textcolor{\ediThree}{the existence and content of messages} in media \textcolor{\editageTwo}{using various techniques,} \textcolor{\ediThree}{including} \textcolor{\editageTwo}{changing the} pixels in an image{~\cite{bennett2004linguistic}}. 
\textcolor{\ediThree}{Generally, steganography is used to achieve two main goals.}
\textcolor{\ediThree}{On the one hand,} it is used as digital watermarking to protect intellectual property. \textcolor{\ediThree}{On the other hand,} it is used as a covert \textcolor{\editageTwo}{approach} to communicating without \textcolor{\editageTwo}{the possibility of detection} by unintended observers.
In contrast, \textcolor{\ediThree}{steganalysis is the study of detecting} hidden messages.
\textcolor{\ediThree}{Steganalysis also has two main goals, which are}  detection and decryption of hidden messages\textcolor{\revisionOne}{~\cite{bennett2004linguistic, mitras2013proposed}}.

Among the various media \textcolor{\editageTwo}{employed to hide} information,
deo\-xyribonucleic acid (DNA) is appealing \textcolor{\editageTwo}{owing to its \textcolor{\ho}{chemical} stability and,} thus is a suitable candidates as \textcolor{\ediThree}{a} carrier of \textcolor{\ediThree}{concealed} information. 
As a storage medium, DNA \textcolor{\ediThree}{has the capacity to store} large amounts of data \textcolor{\ediThree}{that} exceed the capacity of current storage media{~\cite{beck2012finding}}. For instance, a gram of DNA contains approximately \textcolor{\ho}{$10^{21}$} DNA bases (108 tera-bytes), which \textcolor{\editageTwo}{indicates} that only a few grams of DNA can store all information available{~\cite{gehani2003dna}}. In addition, with the advent of next-generation sequencing, individual genotyping has become \textcolor{\ho}{affordable{~\cite{cordell2005genetic}}}, and DNA in turn has become an appealing covert channels.

\textcolor{\altcolor}{To hide information in a DNA sequence, steganography methods require \textcolor{\editageTwo}{that a} reference target sequence and \textcolor{\ediThree}{a message to} be hidden{~\cite{katzenbeisser2000information}}. A na\"ive example of a substitution-based method for watermarking that exploits the preservation of amino acids is shown in \figurename~\ref{fig:watermarked} \textcolor{\ho}{(see the caption for details)}. The hiding space of this method is restricted to exon regions using a complementary pair that does not interfere with protein translation. However, most DNA steganography methods are designed \textcolor{\ho}{without considering the hiding spaces}, and they change a sequence into binary format \textcolor{\ediThree}{ utilizing} well-known encryption techniques.}

\textcolor{\revisionOne}{In this regard,} Clelland et al.{~\cite{clelland1999hiding}}, first proposed DNA steganography \textcolor{\ediThree}{that utilized} the microdot technique.
Yachie et al.{~\cite{yachie2007alignment}}, demonstrated that living organisms can be used as data storage media by inserting artificial DNA into artificial genomes and using a substitution cipher coding scheme. This technique is reproducible and successfully inserts four watermarks into the cell of a living organism{~\cite{gibson2010creation}}. \textcolor{\revisionTwo}{Several other encoding schemes have been proposed {~\cite{brenner2000vitro, tanaka2005public}}. The DNA-Crypt coding scheme  ~\cite{heider2007dna} translates a message into 5-bit sequences, and the ASCII coding scheme{~\cite{jiao2008code}} translates words into their ASCII representation, converts them from decimals to binary, and then replaces 00 with adenine (\texttt{A}), 01 with cytosine (\texttt{C}), 10 with guanine (\texttt{G}), and 11 with thymine (\texttt{T}).}

\textcolor{\editageTwo}{With the recent advancements with respect to} steganography methods, various steganalysis studies have been \textcolor{\ediThree}{conducted using traditional} storage media.
\textcolor{\editageTwo}{D}etection techniques \textcolor{\editageTwo}{that are} based on statistical analysis, neural networks, and genetic algorithms{~\cite{maitra2011digital}} have been developed for common covert objects \textcolor{\altcolor}{such as} digital images, video, and audio. For example, Bennett{~\cite{bennett2004linguistic}} exploits letter frequency, word frequency, grammar style, semantic continuity, and logical methodologies.
\textcolor{\altcolor}{
However, these conventional steganalysis methods have not been applied to DNA steganography.}

In this paper, we show that conventional steganalysis methods are not directly \textcolor{\ho}{applicable} to DNA steganography. \textcolor{\editageTwo}{Currently, the} most common\textcolor{\editageTwo}{ly employed} detection schemes, \textcolor{\editageTwo}{i.e.,} a statistical hypothesis testing methods,
\textcolor{\editageTwo}{are limited with respect to} the number of input queries \textcolor{\ediThree}{in order} to estimate distribution to perform statistical test{~\cite{grosse2017statistical}}. To overcome the limitations of these existing methods, we propose a DNA steganalysis method based on learning the internal structure of unmodified genome sequences (\textit{i.e.}, intron and exon modeling{~\cite{lee2015boosted,lee2015dna}}) using deep recurrent neural networks (RNNs). 
The \textcolor{\ho}{RNN}-based classifier is used to \textcolor{\ediThree}{identify} modified genome sequences.
In addition, we enhance our proposed model using unsupervised pre-training of a sequence-to-sequence autoencoder \textcolor{\editageTwo}{in order} to overcome the restriction of the robustness of detection performance. Finally, we compared our \textcolor{\editageTwo}{proposed} method to various machine learning-based classifiers and biological sequence analysis methods \textcolor{\editageTwo}{that were} implemented on top of our framework.

\section{Background}
\begin{figure}[t]
	\centering
    \begin{minipage}{.45\textwidth}
    	\centering
        \includegraphics[width=\linewidth]{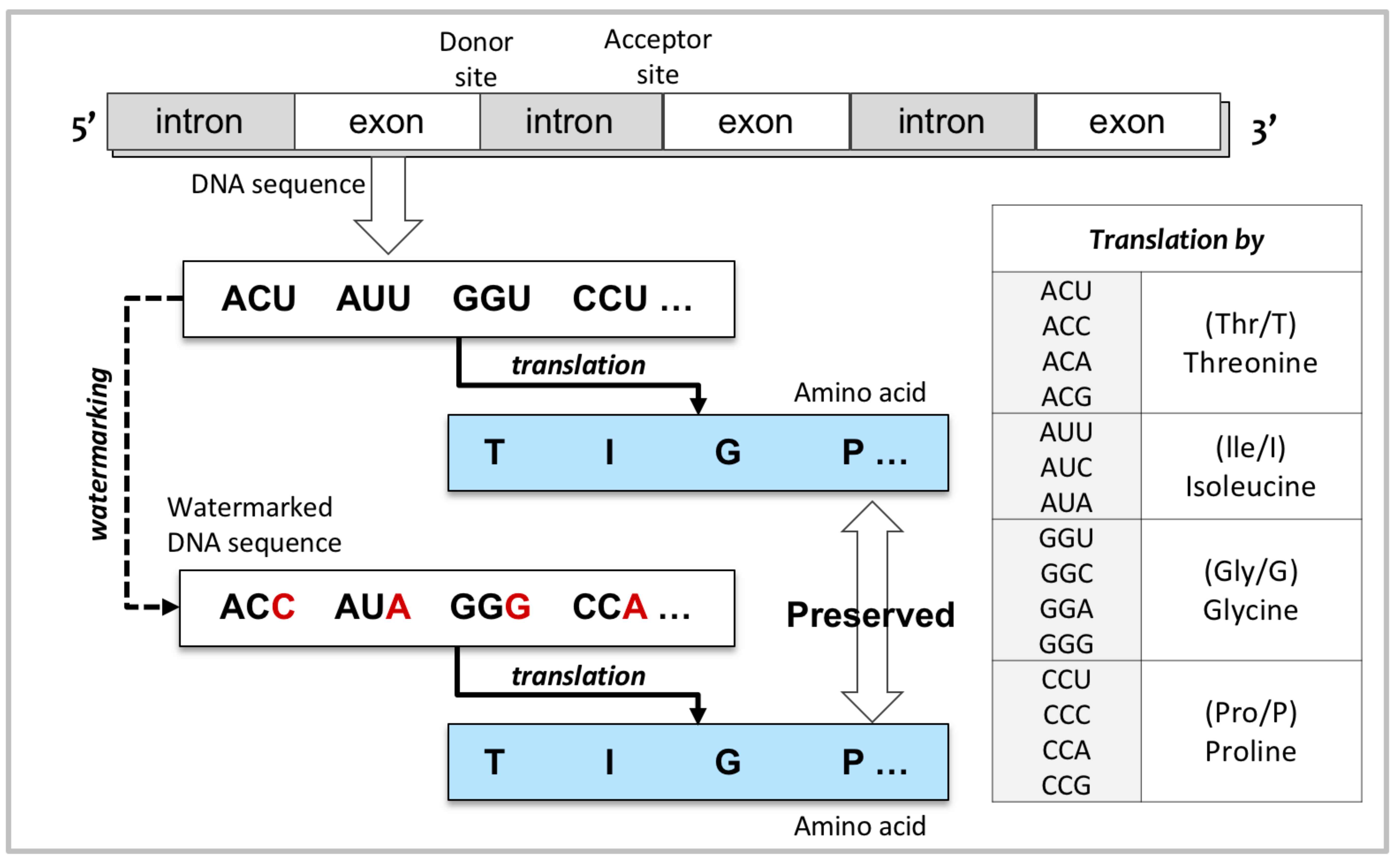}
        \caption{\textcolor{\revisionOne}{DNA hiding scheme using synonymous codons. A watermark is a scheme \textcolor{\ediThree}{used} to deter unauthorized dissemination by marking hidden symbols or texts.} 
For the conservation of amino acids, DNA watermarking can be changed to one of the synonymous codons.}
        \label{fig:watermarked}
    \end{minipage}
    \begin{minipage}{.45\textwidth}
    	\centering
        \includegraphics[width=0.920\linewidth]{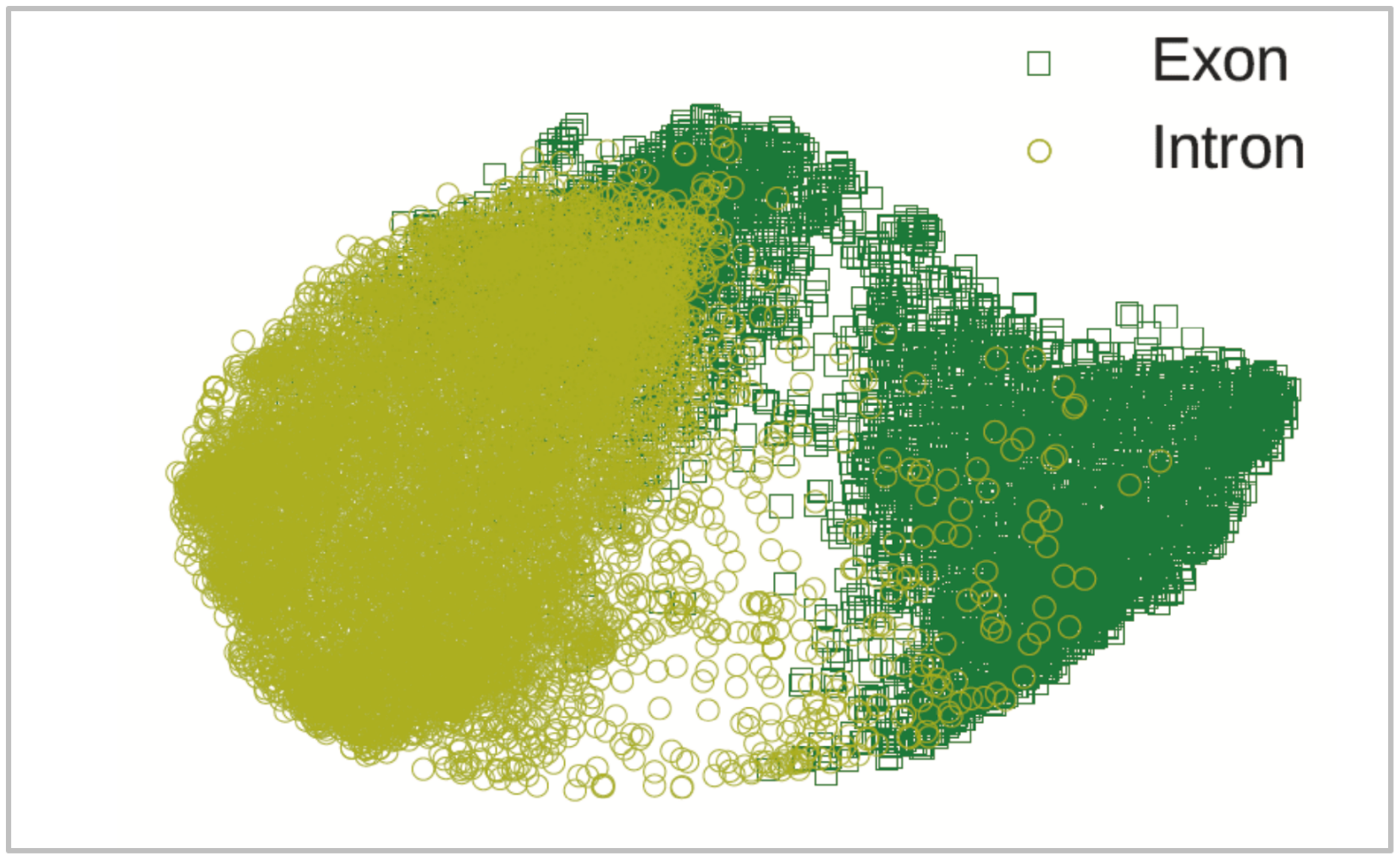}
        \caption{Learned representation of DNA sequences. The learned representations for each coding and non-coding region projected into a two-dimensional (2-D) space using t-SNE~\cite{maaten2008visualizing}. The representation is based on sequence-to-sequence learning using an autoencoder and stacked RNNs.} 
    \end{minipage}
    \label{fig:tsne}
\end{figure}


We use the standard terminology of information hiding{~\cite{anderson1996information}} to \textcolor{\editage}{provide} a brief explanation of \textcolor{\ediThree}{the} \textcolor{\revisionTwo}{related background}.
\textcolor{\revisionfour}{
\textcolor{\ho}{For example}, two hypothetical parties, \textcolor{\ho}{(i.e., a sender and a receiver)} wish to exchange genetically modified organisms (GMOs) protected by patents. \textcolor{\ho}{A} third party detects watermark sequence from the GMOs for unauthorized use. Both the sender and receiver use the random \textcolor{\ho}{oracle{~\cite{canetti2004random}}} model, which posits existing steganography schemes, to embed their watermark message, and the third party uses our proposed model to detect the watermarked signal.}
A random oracle model posits the randomly chosen function $H$, which can be evaluated only by \textcolor{\ho}{querying the} oracle that returns $H(m)$ given input $m$. 

\subsection{Notations}
The notations used in this paper are as follows: $\textbf{D} = \{D_1, \cdots, D_n\}$ is a set of DNA sequences of $n$ species; ${\hat{\textbf{D}}} = \{\hat{D}_1, \cdots, \hat{D}_n\}$ is a set of DNA sequences of $n$ species and the hidden messages are embedded for some species $\hat{D}_i$; $m \in \{\texttt{A,C,G,T}\}^{\textcolor{\fix}\ell}$ is the input sequence where $\ell$ is the length of the input sequence; $\hat{m} \in \{\texttt{A,C,G,T}\}^{\ell}$ is the encrypted value of $m$ where $\ell$ is the length of the encrypted sequence; $E$ is an encryption function, which takes input $m$ and returns the encrypted sequence $E(m) \rightarrow \hat{m}$; $\mathbf{M}_{D_i}$ is a trained model that takes target species $D_i$ as training input; $\overline{y}$ is an averaged output score $y$; $\hat{y}$ is a probability output given by the trained model $\mathbf{M}_{D_i}(\hat{m}) \rightarrow \textcolor{\revisionTwo}{\hat{y}}$ given input $\hat{m}$, where $\hat{m} \in \hat{D}_i$; $\mathcal{A}$ is a probabilistic polynomial-time adversary. The \textcolor{\ho}{adversary{\cite{bellare1993random}}} is an attacker that queries messages to the oracle model; $\epsilon$ is the standard deviation value of \textcolor{\revisionTwo}{score} $y$.

\subsection{Hiding Messages}\label{hiding_space}

The \textcolor{\revisionTwo}{hiding} positions of \textcolor{\ediThree}{a} DNA sequence segment are \textcolor{\ediThree}{limited} compared to \textcolor{\editageTwo}{those of the} covert channel \textcolor{\editageTwo}{because} the sequences are carried over after the translation and transcription processes in the exon region. 
For example, assume that \texttt{ACGGTTCCAATGC} is a reference sequence, and 01001100 is the message to be hidden. The reference sequence is then translated according to any coding schemes. In this example, we apply the DNA-crypt coding scheme{~\cite{heider2007dna}}, which converts the DNA sequence to binary replacing \textcolor{\ediThree}{\texttt{A} with 00, \texttt{C} with 01, \texttt{G} with 10, and \texttt{T} with 11}. The reference sequence is then translated to 00011010111101010000111001 and divided into key bits \textcolor{\editageTwo}{that are} defined by the sender and receiver. Assume that the \textcolor{\revisionTwo}{length of the key is 3}, the reference sequence can be expressed as  000, 110, 101, 111, 010, 100, 001, 110, 01, and \textcolor{\editageTwo}{the message is} concealed at the \textcolor{\revisionThree}{first} position. The DNA sequence with the concealed messages are then represented as 0000, 1110, 0101, 0111, 1010, 1100, 0001, 0110, 01. Finally, the sender transmits the transformed DNA sequence of \texttt{AATGCCCTGGTAACCG}. The recipient can extract the hidden message using \textcolor{\revisionTwo}{the} \textcolor{\ediThree}{pre-}defined key.

\subsection{Determination of Message-Hiding Regions}
Genomic sequence regions \textcolor{\ediThree}{(i.e., exons and introns)} are utilized depending on whether the task is data storage or transport.
Intron regions are \textcolor{\ediThree}{suitable} for transportation \textcolor{\ediThree}{since they are not transcribed and are} removed by splicing{~\cite{keren2010alternative, lockhart2000genomics}} during transcription.
This \textcolor{\ediThree}{property of introns} provides \textcolor{\ediThree}{large sequence} space for \textcolor{\ediThree}{concealing} data, creating potential covert channels.
\textcolor{\editageTwo}{In} contrast, \textcolor{\ediThree}{data storage (watermarking) requires data to} be resistant to degradation or truncation.
Exons are a suitable \textcolor{\editage}{candidate} for storage  \textcolor{\editage}{because} \textcolor{\ediThree}{underlying DNA} sequence \textcolor{\ediThree}{is conserved} after the translation and transcription processes{~\cite{shimanovsky2002hiding}}.
These two components of internal structure \textcolor{\ediThree}{components} in eukaryote genes are involved \textcolor{\editage}{in} DNA steganography as \textcolor{\editageTwo}{the} payload (watermarking) or carrier (covert channels). \figurename~\ref{fig:tsne} shows the learned representations of introns and exons which are calculated by softmax function. The softmax function reduces the outputs of intron and exons to range between 0 and 1. 
The \textcolor{\editageTwo}{2D} projection position of introns and exons will change if hidden messages are embedded without considering shared patterns between \textcolor{\ediThree}{the genetic components (e.g., complementary pair rules)}. Thus, the construction of a classification model to enable a clear separation axis of these shared patterns \textcolor{\editageTwo}{is an} important factor in the detection of hidden messages.

\section{Methods}
\begin{figure*}[t!]
\includegraphics[width=0.90\textwidth]{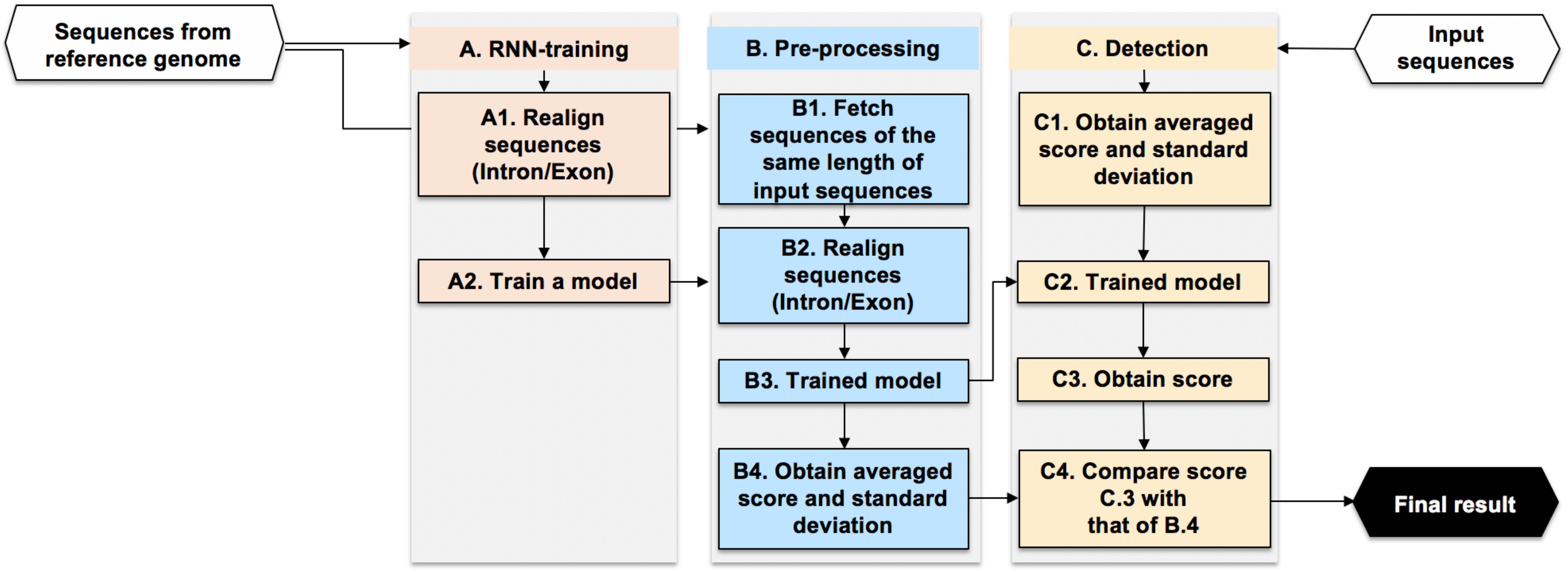}
\caption{Flowchart of proposed DNA steganalysis pipeline.}
\label{fig:flowchart}
\end{figure*}

Our proposed method uses RNNs{~\cite{schmidhuber2015deep} to detect hidden messages in DNA. \figurename~\ref{fig:flowchart} shows our proposed \textcolor{\ediThree}{steganalysis pipeline}. The pipeline \textcolor{\ediThree}{comprises} of training and detection phases. In the model training phase, the model learns \textcolor{\revisionTwo}{the} distribution of unmodified genome sequences that \textcolor{\editageTwo}{distinguishes between} introns and exons (see Section~\ref{proposed_model} for the model architecture). In the detection phase, we obtain a prediction score exhibiting the distribution of introns and exons. By exploiting the obtained prediction score, we formulate a detection principle.
The details of the detection principle are described in Section~\ref{proposed_steg}.

\subsection{\textcolor{\ho}{\textcolor{\ediThree}{Proposed DNA} Steganalysis Principle}}\label{proposed_steg}
The security of the random oracle is based on an \textcolor{\ho}{\emph{experiment} $E$} involving an adversary $\mathcal{A}$, \textcolor{\editageTwo}{as well as} $\mathcal{A}$'s indistinguishability of the encryption.
Assume that we have the random oracle that acts like a current steganography scheme \textcolor{\ho}{$S$} with only a negligible success probability.
The experiment \textcolor{\ho}{$E$} can be defined for any encryption scheme $S$ over message space $\textbf{D}$ and for adversary $\mathcal{A}$.
We describe the proposed method to detect hidden messages using the random oracle. For the \textcolor{\ho}{$E$}, the random oracle chooses a random steganography scheme \textcolor{\ho}{$S$}. 
    \textcolor{\ho}{Scheme $S$} modifies or extends \textcolor{\editage}{the process of} mapping a sequence \textcolor{\editageTwo}{with} length $n$ input to a sequence \textcolor{\editageTwo}{with} length $\textcolor{\fix}{\ell}$ \textcolor{\editageTwo}{with a} random sequence as the output. 
\textcolor{\editageTwo}{The} process of mapping sequences can be \textcolor{\editageTwo}{considered} as a table \textcolor{\editageTwo}{that} indicates for each possible input $m$ the corresponding output value $\hat{m}$.
With chosen \textcolor{\ho}{scheme $S$}, $\mathcal{A}$ chooses a pair of sequences {$m_0,m_1 \in D_i$}. The random oracle which posits the \textcolor{\ho}{scheme $S$} selects a bit \textcolor{\revisionThree}{ $b \in \{0,1\}$} and sends encrypted message $ S(m_b) \textcolor{\revisionTwo}{\rightarrow} \hat{m}$ to the adversary. The adversary outputs a bit $b'$. Finally, the output of the \textcolor{\ho}{$E$} is defined as 1 if $b' = b$, and 0 otherwise. $\mathcal{A}$ succeeds in the \textcolor{\ho}{$E$} in the case of distinguishing $m_b$. \textcolor{\ediThree}{Our methodology using $E$ is described} as follows:
\begin{enumerate} [leftmargin=*,label=(\roman*), labelindent=1.5mm,labelsep=1.3mm]
    \item We construct $\textup{M}_{D_i}$ (\figurename~\ref{fig:flowchart}-A)  that runs on a random oracle where selected species $D_i \in \textbf{D}$.
     Note that a model $M$ can be based on any classification model, but the key to select a model is to reduce the standard deviation. Our proposed model $M$ is described in Section~\ref{proposed_model}.

    \item \textcolor{\revisionThree}{$\mathcal{A}$ computes $y$ (\figurename~\ref{fig:flowchart}-B4) using $M_{D_i}(m)$ given $m \in D_i$.}
    
     \item \textcolor{\revisionThree}{$\mathcal{A}$ computes the standard deviation $\epsilon$ of $y$ (\figurename~\ref{fig:flowchart}-B)}.

    \item $\mathcal{A}$ computes $\hat{y}$ \textcolor{\revisionTwo}{(\figurename~\ref{fig:flowchart}-C3)} using $M_{D_i}(\hat{m})$ given $\hat{m} \in \hat{D}_i$.
    
    \item $\hat{m}$ is successfully detected \textcolor{\revisionTwo}{(\figurename~\ref{fig:flowchart}-C4)} if
    \begin{equation}\label{equation2}
        |  \textcolor{\revisionThree}{\overline{y}} - \hat{y} | > \epsilon. 
    \end{equation}
\end{enumerate}
\begin{figure*}[t!]
\centering
\includegraphics[width=0.95\textwidth]{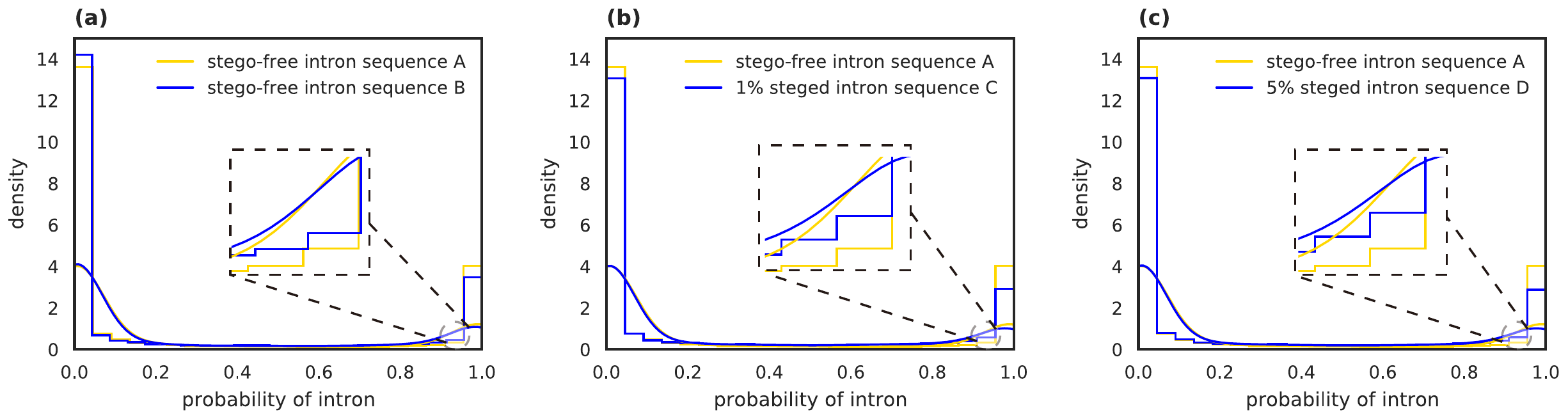}
\caption{Final score of intron/exon sequence obtained from the softmax of the neural network (best viewed in color).
(a) kernel density differences between two stego-free intron sequences
(b) kernel density differences between stego-free and 1\% perturbed steged intron sequences.
(c) kernel density differences between stego-free and 5\% perturbed steged intron sequences.
}
\label{fig:probability}
\end{figure*}
\noindent
\textcolor{\revisionTwo}{This gives two independent scores $y$ and $\hat{y}$ from $M_{D_i}$. The score $y$ will have the same range of \textcolor{\editageTwo}{the} unmodified genome sequences whereas the score $\hat{y}$ will have a different range of modified genome sequences.
If the score difference between y and $\hat{y}$ is larger than the standard deviation of \textcolor{\editageTwo}{the} unmodified genome sequence distribution, it \textcolor{\editageTwo}{may} be that the sequence has been forcibly changed.
\figurename~\ref{fig:probability} shows the histogram of the final score of $y$ and $\hat{y}$ returned from softmax of the neural network. If the message is hidden, we can see that the final score from softmax of the neural network differs \textcolor{\editageTwo}{over} the range $\overline{y} \pm \epsilon$.}
\textcolor{\editageTwo}{From Eq.~(\ref{equation2}) below,} we show that detection is possible using information theoretical proof based on entropy \textcolor{\ho}{$H$~({Ref.~\cite{blahut1987principles}})}.

\begin{lemma}
A DNA steganography scheme is not secure if \textup{$H(\textbf{D}) \neq H(\hat{\textbf{D}}|\textbf{D})$}.
\end{lemma}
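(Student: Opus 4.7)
\bigskip
\noindent\textbf{Proof proposal.}
The plan is to derive a contradiction with the security definition introduced for the random oracle experiment $E$, by turning the entropy inequality $H(\textbf{D}) \neq H(\hat{\textbf{D}}\mid \textbf{D})$ into a concrete distinguisher based on the trained model $\mathbf{M}_{D_i}$ and the detection rule in Eq.~(\ref{equation2}). Recall that $S$ is called secure only when any probabilistic polynomial-time adversary $\mathcal{A}$ wins $E$ with probability at most $\tfrac{1}{2} + c$ for negligible $c$; equivalently, the induced distributions over unmodified sequences in $\textbf{D}$ and their encoded counterparts in $\hat{\textbf{D}}$ must be statistically indistinguishable. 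My strategy is therefore to (i) interpret the entropy gap information-theoretically, (ii) transport it to a classifier-level score gap, and (iii) package the classifier as a winning adversary in $E$.

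First I would invoke the standard identity $H(\hat{\textbf{D}}\mid \textbf{D}) = H(\hat{\textbf{D}}) - I(\textbf{D};\hat{\textbf{D}})$ to observe that the hypothesis $H(\textbf{D}) \neq H(\hat{\textbf{D}}\mid \textbf{D})$ forces either $H(\textbf{D}) \neq H(\hat{\textbf{D}})$, or a non-zero mutual information $I(\textbf{D};\hat{\textbf{D}})$ that does not cancel the marginal gap. In both cases the joint law of $(\textbf{D}, \hat{\textbf{D}})$ cannot coincide with the joint law that would arise if $\hat{\textbf{D}}$ were a fresh draw from the same distribution as $\textbf{D}$ (i.e., the behaviour demanded by perfect security). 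Thus there exists a measurable event on which the push-forward distributions through $S$ and through identity differ by a non-negligible total-variation mass; this is the standard step where an entropy separation is promoted to a statistical separation.

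Next I would construct an explicit adversary from $\mathbf{M}_{D_i}$. Given the challenge $\hat{m}$ from the oracle, $\mathcal{A}$ computes $\hat{y} = \mathbf{M}_{D_i}(\hat{m})$ and compares it against the pre-calibrated statistics $\overline{y}$ and $\epsilon$ obtained during the training phase on $\textbf{D}$ (cf.\ Fig.~\ref{fig:flowchart}-B). The model $\mathbf{M}_{D_i}$ approximates the intron/exon distribution of $\textbf{D}$, so its softmax score is, up to bounded error, a sufficient statistic for discriminating the two distributions established in the previous paragraph. The adversary outputs $b'=1$ when $|\overline{y}-\hat{y}| > \epsilon$ and $b'=0$ otherwise. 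Because the underlying distributions are separated in total variation, a Neyman--Pearson-style argument shows that this rule achieves success probability strictly above $\tfrac{1}{2}+c$ for some non-negligible $c$, violating the security condition and establishing the lemma.

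The hard part will not be steps (i) or (iii), which are essentially bookkeeping, but the middle step: entropy differences do not translate automatically into total-variation distance, since distinct distributions can share the same entropy. I would address this by using Pinsker's inequality together with the chain-rule form of $H(\hat{\textbf{D}}\mid \textbf{D})$, so that the entropy gap is lower bounded by (a monotone function of) the Kullback--Leibler divergence between the true joint law and the ``secure'' product law; Pinsker then supplies the needed $\ell_1$ separation that the score $\hat{y}$ can exploit through Eq.~(\ref{equation2}). Care must also be taken that the approximation error of $\mathbf{M}_{D_i}$ is smaller than this separation, which is exactly why the lemma is stated qualitatively (``not secure'') rather than quantitatively.
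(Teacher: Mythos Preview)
Your route is genuinely different from the paper's, and it also has a real gap.

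\medskip
\noindent\textbf{How the paper actually argues.} The paper's proof is purely information-theoretic and never touches the experiment $E$, the trained model $\mathbf{M}_{D_i}$, Eq.~(\ref{equation2}), total variation, or Pinsker. Instead it introduces a third object, the message-hiding space $\textbf{C}$, and works with the mutual information $I(\textbf{C};(\textbf{D},\hat{\textbf{D}}))$. The argument is: for a scheme to be secure one must have $I(\textbf{C};(\textbf{D},\hat{\textbf{D}}))=0$, i.e.\ $H(\textbf{C})=H(\textbf{C}\mid(\textbf{D},\hat{\textbf{D}}))$; this is shown to force $m_i=\hat m_i$ for every $i$, which is impossible for any addition- or substitution-based scheme over the alphabet $\{\texttt{A},\texttt{C},\texttt{G},\texttt{T}\}$. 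The conclusion is then drawn from $H(\textbf{C})>H(\textbf{C}\mid(\textbf{D},\hat{\textbf{D}}))$. So the paper's proof is a short entropic impossibility argument about $\textbf{C}$, not an operational construction of a distinguisher; your plan to package $\mathbf{M}_{D_i}$ as an adversary in $E$ is an entirely different (and more ambitious) program.

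\medskip
\noindent\textbf{Where your argument breaks.} Your step (ii)$\to$(iii) does not go through as written. You propose to show that ``the entropy gap is lower bounded by (a monotone function of) the Kullback--Leibler divergence'' and then apply Pinsker. But Pinsker gives $\mathrm{TV}\le\sqrt{\tfrac{1}{2}\mathrm{KL}}$, so to obtain a \emph{lower} bound on total variation you need a \emph{lower} bound on $\mathrm{KL}$, i.e.\ an inequality of the form $\mathrm{KL}\ge g(\text{entropy gap})$, not the reverse. No such inequality follows from $H(\textbf{D})\neq H(\hat{\textbf{D}}\mid\textbf{D})$ alone: take $\hat{\textbf{D}}=\textbf{D}$ with $H(\textbf{D})>0$; then $H(\hat{\textbf{D}}\mid\textbf{D})=0$, the hypothesis of the lemma holds, yet the marginals of $\textbf{D}$ and $\hat{\textbf{D}}$ coincide and every relevant $\mathrm{KL}$ (and $\mathrm{TV}$) is zero. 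So the entropy hypothesis, by itself, cannot be promoted to a statistical separation that any distinguisher---let alone $\mathbf{M}_{D_i}$---could exploit. Your further claim that the softmax score of $\mathbf{M}_{D_i}$ is ``up to bounded error, a sufficient statistic'' for the two distributions is likewise an assumption, not a consequence of anything established earlier. If you want to salvage the operational route, you would need an additional structural hypothesis on the scheme $S$ (e.g., that it genuinely perturbs the marginal over $\{\texttt{A},\texttt{C},\texttt{G},\texttt{T}\}^\ell$), which is closer in spirit to what the paper obtains by bringing in $\textbf{C}$ and arguing that $m_i\neq\hat m_i$ for nontrivial schemes.
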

\begin{proof}
The mutual joint entropy $H(\textbf{D},\hat{\textbf{D}}) = H(\textbf{D}) + H(\hat{\textbf{D}}|\textbf{D})$ is the union of both entropies for distribution $\textbf{D}$ and $\hat{\textbf{D}}$.
According to Gallager at el{~\cite{gallager1968information}}, the mutual information of $I(\textbf{D};\hat{\textbf{D}})$ is \textcolor{\editageTwo}{given as} $I(\textbf{D};\hat{\textbf{D}}) = H(\textbf{D}) - H(\textbf{D}|\hat{\textbf{D}})$. It is symmetric in $\textbf{D}$ and ${\hat{\textbf{D}}}$ such that $I(\textbf{D};\hat{\textbf{D}}) = I(\hat{\textbf{D}};\textbf{D})$, and always non-negative.
The conditional entropy between two distribution is 0 if and only if the distributions are equal. Thus, the mutual information \textcolor{\editage}{must} be zero to define secure DNA steganography schemes:
\begin{equation}\label{equation3}
I(\textbf{C}; (\textbf{D},\hat{\textbf{D}})) = H(\textbf{C}) - H(\textbf{C} | (\textbf{D},\hat{\textbf{D}}))  = 0.
\end{equation}
\noindent
where \textbf{C} is message hiding space and it follows \textcolor{\editage}{that}:
\begin{equation}
 H(\textbf{C}) = H( \textbf{C} | (\textbf{D},\hat{\textbf{D}})).
\end{equation}
Eq.~(\ref{equation3}) \textcolor{\editageTwo}{indicates} that the amount of entropy $H(\textbf{C})$ must not be decreased \textcolor{\editage}{based on} the knowledge of $\textbf{D}$ and $\hat{\textbf{D}}$.
It follows that the secure steganography scheme is obtained if and only if:
\begin{equation}
    \forall_i \in \mathbb{N}, m_i \in \textbf{D}, \hat{m_i} \in {\hat{\textbf{D}}} : m_i = \hat{m_i}. \nonumber
\end{equation}
\textcolor{\fix}{
Note that \textcolor{\editageTwo}{for} $m_i=\hat{m_i}$ \textcolor{\editageTwo}{it} is \textcolor{\editageTwo}{not possible to} distinguish \textcolor{\editageTwo}{between the} original sequence and the stego sequence.}
Considering that the representations of $\hat{m}$ are limited to  \{\texttt{A,C,G,T}\}, \textcolor{\editageTwo}{it is nearly impossible to satisfy} the condition \textcolor{\editage}{because} current steganography schemes are all based on the assumption of addition or substitution.
\textcolor{\editage}{Because} $\textbf{C}$ is independent of $\textbf{D}$, the amount of information will increase over distribution $\textbf{D}$ if hidden messages are inserted over distribution ${\hat{\textbf{D}}}$.
We can conclude \textcolor{\editage}{that} the schemes are not secure under condition $H(\textbf{C}) > H(\textbf{C} | (\textbf{D},{\hat{\textbf{D})}})$. 
\end{proof}
\noindent

\begin{figure*}[t!]
\includegraphics[width=0.85\textwidth]{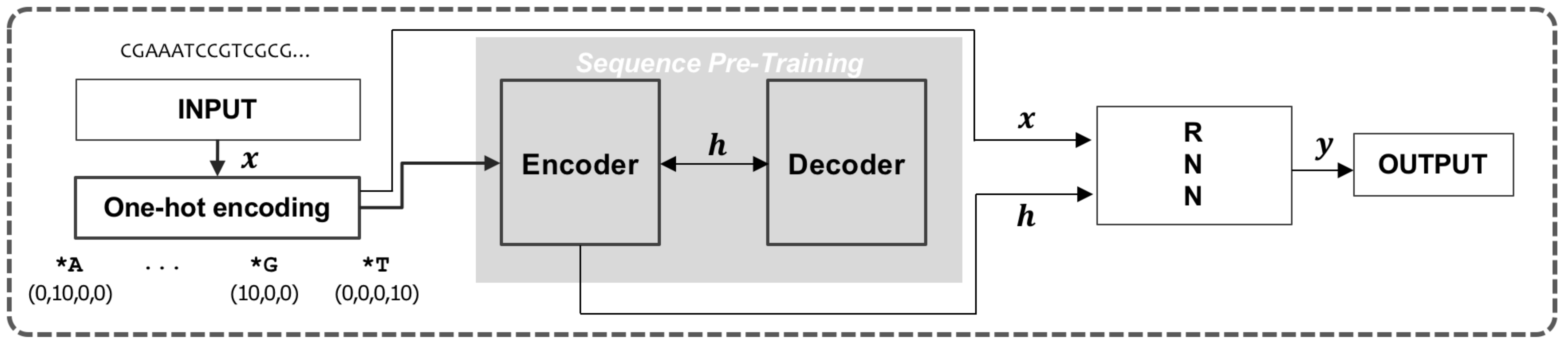}
\centering
\caption{Overview of proposed RNN methodology.}
\label{fig:model_structure}
\end{figure*}

\subsection{Proposed Steganalysis \textcolor{\revisionThree}{RNN} Model}\label{proposed_model}
The proposed model is based on sequence-to-sequence learning using an autoencoder and stacked RNNs{~\cite{peterson2014common}}, \textcolor{\editageTwo}{where} the model training \textcolor{\ediThree}{consists of} two main steps: 1) unsupervised pre-training of sequence-to-sequence autoencoder for modeling an overcomplete case, and 2) supervised fine-tuning of stacked RNNs for modeling patterns between canonical and non-canonical splice sites (see \figurename~\ref{fig:model_structure}).
In the proposed model, we use a set of DNA sequences \textcolor{\ediThree}{labeled as} introns and exons. These sequences are converted into a binary vector by orthogonal encoding{~\cite{baldi2001bioinformatics}}.
It employs $n_c$-bit one-hot encoding. For $n_c= 4$, $\{ \texttt{A,C,T,G} \}$ is encoded by 
\begin{equation}\label{input_representation}
   \langle[1,0,0,0], [0,1,0,0], [0,0,1,0], [0,0,0,1]\rangle.
\end{equation}
For example, the sequence $\texttt{ATTT}$ is encoded into a $4 \times 4$ dimensional binary vector $\langle[1,0,0,0],[0,0,0,1],[0,0,0,1],[0,0,0,1]\rangle$. \textcolor{\revisionTwo}{The encoded sequence} is a tuple of \textcolor{\editageTwo}{a} four-dimensional \textcolor{\editageTwo}{(4D)} dense vector, \textcolor{\editageTwo}{and} is connected to the first layer of an autoencoder, which is \textcolor{\editage}{used} for the unsupervised pre-training of sequence-to-sequence learning. An autoencoder is an artificial neural network \textcolor{\editageTwo}{(ANN) that is} used \textcolor{\editage}{to learn} meaningful encoding for a set of data in \textcolor{\editage}{a case involving} unsupervised learning. \textcolor{\editage}{An autoencoder} consists of two components, \textcolor{\editageTwo}{namely} an encoder and decoder. 

\textcolor{\revisionTwo}{The encoder RNN encodes $\mathbf{x}$ to the representation of sequence features $\mathbf{h}$}, and the decoder RNN decodes $\mathbf{h}$ to the reconstructed $\hat{\mathbf{x}}$; \textcolor{\editage}{thus} minimizing the reconstruction errors of
$\mathcal{L}(\mathbf{x}, \mathbf{\hat{x}}) = \lVert \mathbf{x} - \mathbf{\hat{x}} \rVert^{2}$,
\textcolor{\revisionTwo}{
where $\mathbf{x}$ is one-hot encoded input.}
Through unsupervised learning of the encoder-decoder model{~\cite{srivastava2015unsupervised}}, we \textcolor{\editage}{obtain} representations of inherent features $\mathbf{h}$, which are directly connected to the second activation layer. The second layer is RNNs layer \textcolor{\editage}{used} to construct the model. \textcolor{\editage}{The model in turn is used} to determine patterns between canonical and non-canonical splice signals.
We then obtain the tuple of fine-tunned
    $\mathbf{h} = <\mathbf{h_1}, \cdots , \mathbf{h_d}>$,
\textcolor{\revisionTwo}{where $\textbf{h}$ is the representation of sequence features learned by features}, which is a representation of introns and exons in hidden layers, and $\mathbf{d}$ \textcolor{\fix}{is the} dimension of a vector.

The features $\textbf{h}$ learned from the autoencoder are connected to the second stacked RNN layer, which \textcolor{\editage}{consists of} our proposed architecture for outputting \textcolor{\editageTwo}{a} classification score for the given sequence $D_i \in \textbf{D}$. For the fully connected output layer, we use the sigmoid function as the activation. The activation score is given by
 $   \text{Pr}(\textcolor{\revisionThree}{y}=i | \mathbf{h}) = \frac{1/(1+\mathrm{exp}(-\mathbf{w}_{i}^{T}\mathbf{h}))}{\sum_{k=0}^{1} 1/(1+\mathrm{exp}(-\mathbf{w}_{k}^{T}\mathbf{h}))}$,
\noindent
where $y$ is the label \textcolor{\editage}{that indicates} whether the given region \textcolor{\editage}{contains} \textcolor{\revisionThree}{introns ($y=1$) or exons ($y=0$)}.
\textcolor{\editageTwo}{For our training model,} we use a recently proposed optimizer of multi-class logarithmic loss function Adam{~\cite{kingma2014adam}}.
The objective function $\mathcal{L}(\mathbf{w})$ that \textcolor{\editage}{must} be minimized is \textcolor{\editage}{defined} as follows:
\begin{equation}\label{loss_function}
    \mathcal{L}(\mathbf{w}) = -\frac{1}{N} \sum_{n=1}^{N} (y_{i}\textrm{log}(p_{i}) + (1-y_{i})\textrm{log}(1-p_{i}))
\end{equation}
where $N$ is the mini-batch size.
A model $\mathbf{M}_{D_i}$ has a possible score of $p_i$ for one species, where $p_i$ is the score of given non perturbed sequences.  

\section{Results}
\begin{figure*}[t!]
\includegraphics[width=0.85\textwidth]{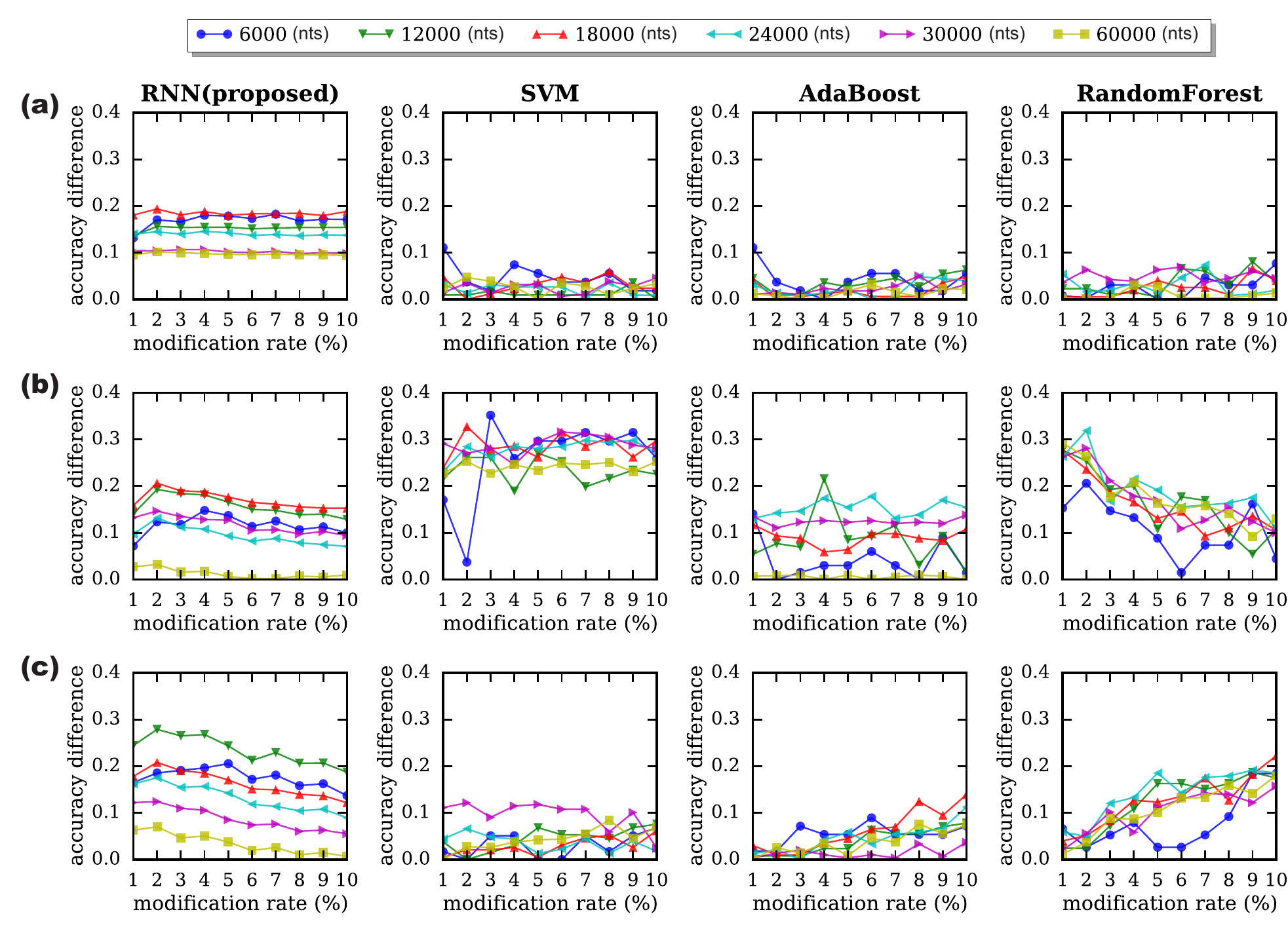}
\centering
\caption{Comparison of learning algorithms with random hiding algorithms (best viewed in color). (a) differences in accuracy for intron region (b) differences in accuracy for exon region (c) difference in accuracy for both region. [The performances of four supervised learning algorithms when detecting hidden messages are shown for six variable lengths of nucleotides (nts).]}

\label{fig:compare_by_samples}
\end{figure*}

\subsection{Dataset}
We simulated our approach \textcolor{\editageTwo}{using} the \textcolor{\revisionTwo}{Ensembl} human genome dataset and human UCSC-hg38 dataset{~\cite{kent2002human}}, which include sequences from 24 human chromosomes (22 autosomes and 2 sex chromosomes).
The Ensembl human genome dataset has a two-class classification (coding, and non-coding) and the UCSC-hg38 dataset has a three-class classification (donor, acceptor, and non-site).

\subsection{Input Representation}
The machine learning approach typically employs \textcolor{\editageTwo}{a} numerical representation of the input for downstream processing. Orthogonal encoding, such as one-hot coding{~\cite{baldi2001bioinformatics}}, is widely used to convert DNA sequences into a numerical format. It employs $n_c$-bit one-hot encoding. For $n_c = 4$,  $\{ \texttt{A,C,T,G} \}$ is encoded as described in Eq.~(\ref{input_representation}).
According to Lee et al.{~\cite{lee2015dna}}, the vanilla one-hot encoding scheme tends to limit generalization because of the sparsity of its encoding (75\% of the elements are zero). Thus, our approach encodes nucleotides into a \textcolor{\editageTwo}{4D} dense vector \textcolor{\editage}{that follows} the direct architecture of a normal neural network layer{~\cite{chollet2015keras}}, which is trained by the gradient decent method.

\subsection{Model Training}
\textcolor{\revisionTwo}{The proposed RNN-based approach uses unsupervised training for the autoencoder and supervised training for the fine-tuning. The first layer of unsupervised training uses 4 input units, 60 hidden RNNs units with 50 epochs and 4 output units \textcolor{\editageTwo}{that are} connected to the second layer. The second layer of supervised training uses 4 input units \textcolor{\revisionTwo}{that are} connected to stacked LSTM layers with full version including forget gates and peephole connections.} The 4 input layers are used for 60 hidden  units with 100 epochs, and the 4 output units are a fully connected output layer containing $K$ units for $K$-class prediction. 

\textcolor{\fix}{In our experiment, we used $K=2$ to classify sequences (coding or non-coding)}. 
\textcolor{\revisionfour}
{For the fully connected output layer, we used the softmax function to classify sequences and the sigmoid function to classify sites for the activation.}
\textcolor{\editageTwo}{For our training model,} we used a recently proposed optimizer of multi-class logarithmic loss function Adam{~\cite{kingma2014adam}}. The objective function $\mathcal{L}(\mathbf{w})$ that has to be minimized is as described in \textcolor{\revisionThree}{Eq~(\ref{loss_function})}. We used a batch size of 100 and followed the batch normalization{~\cite{ioffe2015batch}}. We initialized weights according to a uniform distribution as directed by Glorot and Bengio{~\cite{glorot2010understanding}}. The training time was approximately 46 hours and the running time was less than 1 second (Ubuntu 14.04 on 3.5GHz i7-5930K and 12GB  Titan X).

\begin{figure*}[t!]
\centering
\includegraphics[width=0.875\textwidth]{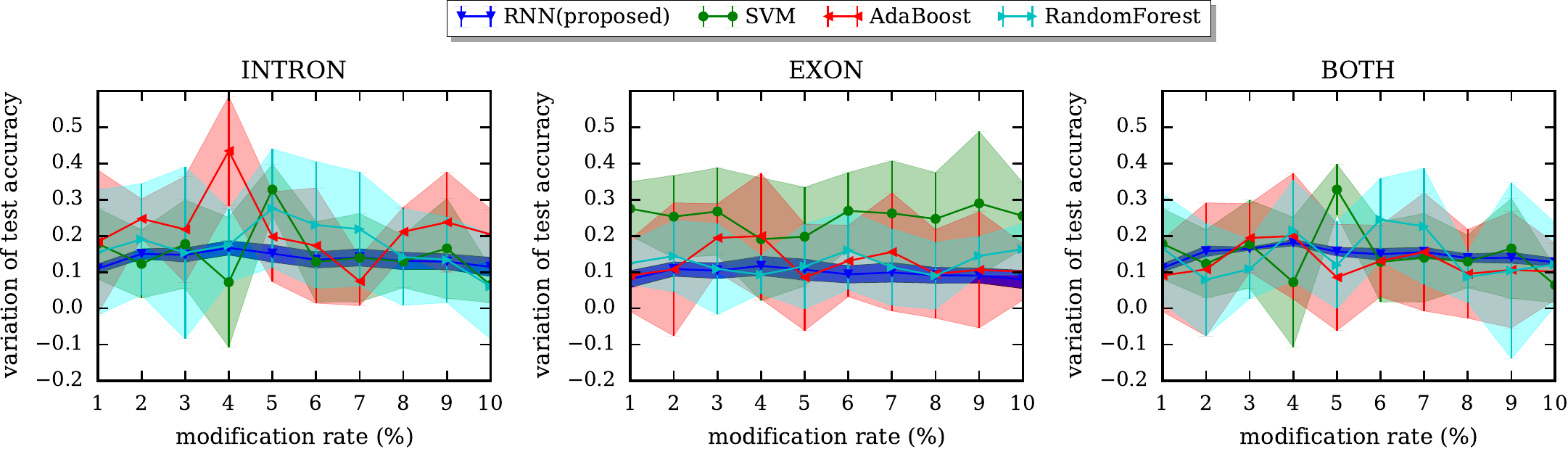}
\caption{Comparison of learning algorithms in terms of robustness (best viewed in color). Mean and variance of accuracy are measured for the fixed \textcolor{\fix}{DNA sequence length of 6000} for 500 cases by changing one percent of the hidden message. The shaded line represents the standard deviation of the inference accuracy.}
\label{fig:errorbars}
\end{figure*}

\textcolor{\revisionTwo}{
\subsection{Evaluation Procedure}
For evaluation \textcolor{\ediThree}{of} performance, we used the score obtained from the softmax of the neural network. We exploited the state-of-the-art algorithm{~\cite{mitras2013proposed}} to embed hidden messages for the message hiding. We randomly selected DNA sequences from the validation set using the \textcolor{\revisionTwo}{Ensembl} human genome dataset. We obtained the score of the stego-free sequence from the validation set. In the next step, we embedded hidden messages to a selected DNA sequence from the validation set, and \textcolor{\editageTwo}{we} obtained the score. Using the score distribution of the stego-free and steged sequences, we evaluated the \textcolor{\revisionTwo}{different scores for} the range $\overline{y} \pm \epsilon$.
The output from softmax of the neural network \textcolor{\editageTwo}{is expected to} have a similar score distribution \textcolor{\editageTwo}{as} the unmodified genome sequences. However, the score distribution changes if messages are embedded. As shown in \figurename~\ref{fig:probability}(b) and \figurename~\ref{fig:probability}(c), modified sequences are distinguishable using our RNNs model. 
}

\subsection{Performance Comparison}
We evaluated the performance of our proposed method based on four supervised learning algorithms (RNNs, SVM, random forests, and adaptive boosting) \textcolor{\editageTwo}{to detect} hidden messages. 
For the performance metric, we used \textcolor{\editageTwo}{the differences in} accuracy.
\footnote{$\textrm{Accuracy} = (TP+TN) / (TP+TN+FP+FN)$, where $TP$, $FP$, $FN$, and $TN$ represent the numbers of true positives, false positives, false negatives, and true negatives, respectively.}
\textcolor{\ediThree}{Using the prediction performance data}, we evaluated learning algorithms with respect to \textcolor{\editage}{the following} three regions; introns dedicated, exons dedicated, and both regions \textcolor{\editage}{together}.

For each algorithm, we generated simulated data for different \textcolor{\fix}{lengths of DNA sequences (6000, 12000, 18000, 24000, 30000, and 60000)} using the UCSC-hg38 dataset{~\cite{kent2002human}}.
\textcolor{\revisionfour}{We also randomly selected 1000 cases for the fixed DNA sequence length for the modification rate 1 to 10\%.} 
\textcolor{\editageTwo}{Using} selected DNA sequences, we obtained the average prediction accuracy of different numbers of samples against non-perturbed DNA sequences for 1000 randomly selected cases. 
In the next step, we obtain the prediction accuracy for the modified \textcolor{\editageTwo}{data} generated according to the hiding algorithms. 
Using \textcolor{\editageTwo}{the} averaged prediction accuracy \textcolor{\editageTwo}{for both the} perturbed and non-perturbed \textcolor{\editageTwo}{cases}, we evaluated the differences between the prediction accuracy rates for \textcolor{\editageTwo}{varying} different numbers of samples. We carried \textcolor{\fix}{out} five-fold cross-validation to obtain the mean/variance of \textcolor{\editageTwo}{the differences in} accuracy.

\begin{table}[t!]
\tbl{Detection performance of sequence alignment and denoising tools.}
{\resizebox{.80\columnwidth}{!}
{\begin{tabular}{l|ccc}
\hline
 &\thinspace \textbf{Both Region (\%)}\thinspace &\thinspace \thickspace \textbf{Intron Region (\%)} \thinspace & \thickspace \thinspace \textbf{Exon Region (\%)}\thinspace \\\hline
 
\textbf{RNN \ (proposed)} & \textbf{99.93} &  \textbf{99.96} &  \textbf{99.94} \\
  BLAST \cite{altschul1990basic} \thickspace & 84.00 & 85.00 & 85.00 \\
  Coral \cite{salmela2010correction} & 0.00 & 0.00 & 0.00 \\
  Lighter \cite{song2014lighter} & 0.00 & 0.00 & 0.00 \\
\hline
\end{tabular}}\label{table:other_tools}
}
\end{table}

\figurename~\ref{fig:compare_by_samples} shows an experiment for each algorithm using six \textcolor{\fix}{variable DNA sequence lengths.} Each algorithm was compared to three different regions \textcolor{\editage}{based on the} six \textcolor{\fix}{variable DNA sequence lengths}. The experiments were \textcolor{\editage}{conducted by} changing from one to then percent of the hidden message. SVM showed good detection performance in the exon region, but showed inferior performance in the intron \textcolor{\ediThree}{as well as} both regions \textcolor{\ediThree}{category}. In the case of adaptive boosting, the detection performance was similar in both \textcolor{\editage}{regions} and \textcolor{\editage}{in} intron \textcolor{\ediThree}{only categorie}, but performed poorly in exon regions. In the case of the random forest, \textcolor{\editageTwo}{the cases with the} exon and both regions showed good performance except for some modification rates. In the intron regions, the detection performance was similar to \textcolor{\editage}{that of} other learning algorithms. \textcolor{\ediThree}{Notably}, our proposed methodology based on RNNs outperformed \textcolor{\ediThree}{all of the existing} hidden messages detection \textcolor{\ediThree}{algorithms for} all \textcolor{\ediThree}{genomic} regions \textcolor{\ediThree}{evaluated}.

\textcolor{\editage}{In addition}, we examined our proposed methodology based on denoising methods using Coral{~\cite{salmela2010correction}} and Lighter{~\cite{song2014lighter}}. The UCSC-hg38 dataset was used to preserve local base structures and perturbed data samples were used as random noise. As shown in Table~\ref{table:other_tools}, the results showed that both Coral and Lighter missed detection for all modification rates in all regions. \textcolor{\editageTwo}{In addition}, the sequence alignment \textcolor{\editageTwo}{method} performed poorly. The results suggest \textcolor{\editageTwo}{that there is a} 15 to 16\% chance that hidden messages may not be detected in all three regions.

To validate the learning algorithms \textcolor{\editage}{with respect to} robustness, we tested \textcolor{\editage}{them} with a fixed \textcolor{\fix}{DNA sequence length of 6000} with 500 cases for each modification rate to measure the mean and variance of the test accuracy.
\figurename~\ref{fig:errorbars} shows how the performance measures (mean and variance of accuracy differences) change for modification rates \textcolor{\editageTwo}{ranging} from 1 to 10 in the intron, exon, and both regions \textcolor{\ediThree}{categories}.
\textcolor{\editageTwo}{The} plotted entr\textcolor{\editageTwo}{ies represents the} the averaged mean over the 500 cases, and shade lines show the average of the variances over \textcolor{\editageTwo}{the} 500 cases.
The results \textcolor{\editageTwo}{indicate} that hidden messages may not be detected if the prediction difference is less than \textcolor{\editageTwo}{the} variance.
The overall analysis \textcolor{\fix}{with respect to the} robustness \textcolor{\editage}{showed that the} learning algorithms of SVM, random forests and adaptive boosting performed poorly.

\section{Discussion}

\textcolor{\editageTwo}{The development of} next-generation sequencing has reduced the price of personal genomics{~\cite{schuster2008next}}, and the discovery of the \textcolor{\ho}{CRIPSPR-Cas9 gene has provided unprecedented control over genomes of many species{~\cite{hsu2014development}}.} \textcolor{\editageTwo}{While the} technology is yet to \textcolor{\ediThree}{be} \textcolor{\editageTwo}{applied to simulations involving} artificial DNA, human DNA sequences may become an \textcolor{\editageTwo}{area in} \textcolor{\editage}{which we can} apply DNA watermarking.
Our experiments using the real UCSC-hg38 human genome implicitly \textcolor{\editage}{consider} that unknown relevant sequences are also detectable because of the characteristics of similar patterns in non-canonical splice sites. The number of donors with \texttt{GT} pairs and acceptors with \texttt{AG} pairs \textcolor{\editage}{were found to be} 86.32\% and 84.63\%, respectively{~\cite{lee2015boosted}}.
Existing steganography techniques modify several nucleotides. \textcolor{\fix}{Considering few single nucleotide modifications, we can transform DNA steganography to the variant calling problem. In this \textcolor{\editageTwo}{regard}, we believe \textcolor{\editageTwo}{that} our methodology can be extended to the field of variant calling.}

Although \textcolor{\editageTwo}{there are} many advantages \textcolor{\editage}{to} using machine learning techniques \textcolor{\editageTwo}{to detect} hidden messages{~\cite{lyu2004steganalysis,erfani2016high,min2017deep}}, \textcolor{\revisionTwo}{the following} improvements \textcolor{\editageTwo}{are required:} parameter tuning is dependent \textcolor{\editage}{on} the steganalyst, \textcolor{\editageTwo}{e.g., the} training epochs, learning rate, and size of the training set; the \textcolor{\editageTwo}{failure} to detect hidden messages \textcolor{\editageTwo}{cannot be corrected} by the steganalyst. However, we expect that the future development of such techniques will resolve the limitations. According to Alvarez and Salzmann{~\cite{alvarez2016learning}}, the numbers of layers and neurons of deep networks can be determined using an additional \textcolor{\ediThree}{class of methods,} sparsity regularization, to the objective function.
The sizes of vectors of grouped parameters of each neuron in each layer \textcolor{\editage}{incur penalties} if the loss converges. The \textcolor{\editageTwo}{affected} neurons are removed if the neurons are assigned \textcolor{\editageTwo}{a value of} zero.

\section*{Acknowledgments}
This work was supported by the National Research Foundation of Korea (NRF) grant funded by the Korea government (Ministry of Science and ICT) [2014M3C9A3063541, 2018R1A2B3001628], and the Brain Korea 21 Plus Project in 2018.

\bibliographystyle{ws-procs11x85}
\bibliography{ws-pro-sample}

\end{document}